\PassOptionsToPackage{table}{xcolor}
\documentclass{article}

\PassOptionsToPackage{numbers,sort&compress}{natbib}
\usepackage{arxiv-ellis}

\usepackage{amsmath,amsfonts,bm}

\def\1{\bm{1}}

\DeclareMathAlphabet{\mathsfit}{\encodingdefault}{\sfdefault}{m}{sl}
\SetMathAlphabet{\mathsfit}{bold}{\encodingdefault}{\sfdefault}{bx}{n}

\newcommand{\Lcal}{{\mathcal L}}

\newcommand{\Ical}{{\mathcal I}}

\newcommand{\Reals}{{\mathbb{R}}}

\usepackage{algorithm}
\usepackage{algpseudocode}
\usepackage{float}
\usepackage{caption}
\usepackage{amsthm}
\newtheorem{proposition}{Proposition}
\usepackage{array}
\usepackage{tikz}
\usetikzlibrary{arrows.meta}
\usepackage{wrapfig}
\newcolumntype{C}[1]{>{\centering\arraybackslash}p{#1}}

\algrenewcommand\algorithmicindent{0.5em}
\title{Learning Discriminative Geometry for Drifting Models}
\author{\textbf{Doudou Zhang}\textsuperscript{1,2}\quad
  \textbf{Wenwen Hou}\textsuperscript{1,2}\quad
  \textbf{Yilin Chen}\textsuperscript{1,2}\quad
  \textbf{Qi Chen}\textsuperscript{1,2}\thanks{Corresponding author.}\\[7pt]
  {\small\textsuperscript{1}ELLIS Institute Finland\quad
  \textsuperscript{2}Aalto University}}
\hypersetup{pdftitle={Learning Discriminative Geometry for Drifting Models},
            pdfauthor={Doudou Zhang, Wenwen Hou, Yilin Chen, Qi Chen}}

\begin{document}

\maketitle

\begin{abstract}
Recently proposed Drifting Models shift iterative distribution refinement from
inference to training, enabling effective one-step generation. However, their performance
on complex image datasets depends strongly on the representation used to
construct the drifting field: pixel-space drifting performs poorly, whereas
pretrained feature spaces substantially improve sample quality for reasons that
remain unclear. We trace this gap to the discriminative geometry of the representation, which determines sample weighting in kernel density estimation (KDE) and, consequently drift. We introduce \emph{persistent representation learning},
which continuously learns a more discriminative representation geometry as the
generator evolves across batches. We further establish a current-step gradient
equivalence between the KDE ratio loss and drift regression loss under matched
conditions, connecting density-ratio-based generator optimization to empirical
drifting and motivating direct control of the drifting velocity. Across multiple
datasets, our method learns effective discriminative representations directly
from pixels and reduces FID by approximately $82-95\%$ over the original
pixel-space Drifting Models, without pretrained encoders. Adapting pretrained representations and applying velocity clipping provide further gains.
\tcblower
{\sffamily\bfseries Code:} \url{https://github.com/aalto-icl/DiscriminativeGeometryforDrifting}
\end{abstract}

\section{Introduction}
\label{sec:introduction}

Diffusion models~\citep{hoDenoisingDiffusionProbabilistic2020,songDenoisingDiffusionImplicit2020}
and flow-matching methods~\citep{liuFlowStraightFast2022,lipmanFlowMatchingGenerative2022} have
achieved remarkable performance in image~\citep{rombachHighResolutionImageSynthesis2021,esserScalingRectifiedFlow2024}
and video generation~\citep{blattmannStableVideoDiffusion2023,yangCogVideoXTexttoVideoDiffusion2024},
with favorable generalization properties supported by recent theoretical
analyses~\citep{chenGeneralizationVAEDiffusion2024,he2026theory}. However,
high-quality synthesis typically relies on iterative sampling, whose computational
cost grows with model scale, spatial resolution, and video length. This has
motivated extensive research on high-quality one-step
generation~\citep{salimansProgressiveDistillationFast2022,liuInstaFlowOneStep2023,songConsistencyModels2023,fransOneStepDiffusion2025,gengMeanFlowsOnestep2025}.
Drifting Models~\citep{dengGenerativeModelingDrifting} provide a distinct
alternative by shifting iterative distribution refinement from inference to
training. During training, they repeatedly construct a drifting field from real
and generated samples and regress the generator toward the corresponding
displaced targets. Once trained, they generate samples in a single inference step, without iterative sampling.

However, the success of Drifting Models on complex image datasets relies heavily
on powerful pretrained encoders. Constructing the drifting field directly in
pixel space often leads to poor sample quality, whereas constructing it in a
pretrained feature space substantially improves
performance~\citep{dengGenerativeModelingDrifting,zhangDistillingDriftingTransformers2026}.
This empirical contrast suggests that the representation space plays a central
role in drifting optimization, yet how it shapes the optimization dynamics remains
poorly understood. Why do pretrained representations induce more effective
drifting fields? Are fixed external features essential, or can suitable
representations emerge jointly with the generator through the drifting objective
itself? Answering these questions is important both for understanding the
mechanism underlying Drifting Models and for improving their training.

We study this problem by examining how the representation space shapes the
drifting field. Representation-induced distances and local neighborhoods
determine the kernel weights, which in turn govern the local density estimates
and the resulting drift direction. When these relationships are defined by a
fixed pixel-space geometry, the batch KDE density-ratio estimate has limited
ability to distinguish real from generated samples, leading to less informative
supervision for the generator. Pretrained encoders instead induce a more
discriminative representation geometry and empirically yield more effective
drifting fields. This observation suggests that effective drifting requires a
representation that captures sample relationships relevant to the evolving
generative task.

Building on this insight, our method addresses both the learning of
representation geometry and its effect on generator optimization. We introduce
\emph{persistent representation learning}, which continuously updates the
representation throughout training while retaining information from successive
discriminative updates in shared parameters. The geometry used to construct the
drifting field can therefore evolve together with the generator, enabling
Drifting Models to learn effective representations directly from pixels without
relying on fixed pretrained features. To characterize how this learned geometry affects generator updates, we build on
the Wasserstein-gradient-flow interpretation of
drifting~\citep{grettonWassersteinGradientFlow2026} and the flow interpretation
of divergence-based GAN training~\citep{yiMonoFlowRethinkingDivergence2023}.
We show that the spatial gradient of the KDE log-density-ratio score recovers
the empirical drifting field up to scaling. Moreover, the corresponding
density-ratio objective yields the same generator-parameter gradient as drift
regression under matched conditions. This equivalence directly connects the
representation-dependent density-ratio geometry to the generator update. Finally, because the drifting field explicitly defines the velocity of each
generated sample, we introduce velocity clipping to bound its magnitude and
stabilize training as the representation evolves. 

Empirically, the resulting framework substantially improves one-step generation
across both learned and pretrained representation settings. Taken together, our
contributions are as follows:
\begin{itemize}
\item We introduce \emph{persistent representation learning}, which accumulates
discriminative updates across batches and progressively adapts the
representation geometry to the evolving generator, supporting both learning directly from pixels and adapting pretrained feature encoders.

\item We establish an algorithmic connection between density-ratio-based
generator optimization and empirical drifting. For Gaussian kernels, we show a
current-step gradient equivalence between the KDE ratio loss and the drift regression loss under matched conditions.

\item We characterize how the local regularity of the density-ratio potential
controls the induced drifting velocity, motivating direct velocity clipping to
stabilize training as the representation evolves.

\item Experiments show that our method reduces FID by approximately $82-95\%$ over the original pixel-space Drifting Models across multiple
datasets without pretrained encoders, with further gains from adapting
pretrained representations and applying velocity clipping. Diagnostic analysis
further reveals how representation learning affects drift alignment and KDE discrimination.
\end{itemize}

\section{Related Work}
\label{sec:related_work}

\textbf{Drifting Models.} Generative Modeling via Drifting~\citep{dengGenerativeModelingDrifting}
constructs a vector field through nonparametric interactions between real and
generated samples, then regresses toward targets displaced by this field to
absorb iterative distribution optimization into a one-step generator.
\citet{turanGenerativeDriftingSecretly2026} connect
Gaussian-kernel drifting fields to score differences between KDE-smoothed
distributions. \citet{laiUnifiedViewScoreBased2026} relate Gaussian- and Laplace-kernel drifting to score-based diffusion models through
Tweedie's formula. \citet{zhangDistillingDriftingTransformers2026} extend Drifting Models to
pretrained representation spaces for one-step distillation and report that
training from scratch without pretrained features remains open.
W-Flow~\citep{hanOneStepGenerativeModeling2026} constructs a drifting model
from the Wasserstein gradient flow of the Sinkhorn divergence and compresses
the resulting particle evolution into a one-step generator. Other studies
improve the computational efficiency of drifting or analyze its finite-particle
convergence and stability~\citep{falahatiDriftXpressFasterDrifting2026,
balasubramanianFiniteParticleConvergenceRates2026,
leeIdentifiabilityStabilityGenerative2026}. Despite these advances, the role of representation in drifting remains
underexplored: existing approaches typically rely on a fixed representation,
leaving open how the discriminative representation can adapt
during training.

\textbf{Generative Adversarial Networks.} GANs train a generator against a discriminator that distinguishes real from
generated samples~\citep{goodfellowGenerativeAdversarialNetworks2014a}.
Variational formulations such as $f$-GAN show that the optimal discriminator
encodes a density ratio through a prescribed transformation
~\citep{nowozinFGANTrainingGenerative2016}, while
MonoFlow~\citep{yiMonoFlowRethinkingDivergence2023} interprets the resulting
generator direction through Wasserstein gradient flows. KDD-GAN and Projected
GAN demonstrate the importance of the representation used for discrimination,
using learned and fixed pretrained feature spaces, respectively
~\citep{lemkhenterGenerativeAdversarialLearning2021,
sauerProjectedGANsConverge2021}. Lipschitz constraints, implemented through
methods such as gradient penalties and spectral normalization, further control
the discriminator gradients received by the generator
~\citep{gulrajaniImprovedTrainingWasserstein2017,
miyatoSpectralNormalizationGenerative2018}. These works highlight two recurring mechanisms in adversarial training:
persistent discriminator learning determines how the discrepancy between real
and generated distributions is represented, while regularity constraints
control the spatial gradients through which this discrepancy acts on the
generator.

\textbf{Wasserstein Gradient Flows.} Wasserstein gradient flows describe the steepest descent of an energy
functional over probability distributions under the geometry of optimal
transport~\citep{santambrogioEuclideanMetricWasserstein2017}. In generative modeling, this perspective defines a velocity field
that transports the generated distribution toward the real data distribution~\citep{arjovskyWassersteinGAN2017}.
\citet{grettonWassersteinGradientFlow2026} connect
Gaussian-kernel drifting
to the limiting point of a Wasserstein gradient flow associated with the KL
divergence between Parzen-smoothed distributions, and analyze extensions to
other distributional objectives. These analyses characterize the distribution-level dynamics underlying
drifting, but leave the finite-batch construction of the drifting field
and the regularity of its induced generator update as separate algorithmic
questions.

Our work connects these three lines of research by studying how the
discriminative representation shapes the empirical
drifting field and the resulting generator update. We introduce persistent
representation learning to adapt the representation across batches
instead of using a fixed or pretrained representation. We further show that
the KDE density-ratio potential connects the drifting field to GAN generator
updates, which motivates directly clipping the drifting velocity to stabilize
training as the representation evolves.

\section{Background}
\label{sec:background}

\textbf{Drifting Models.} Drifting Models~\citep{dengGenerativeModelingDrifting} construct a drifting field
from attractive interactions with real samples and repulsive interactions with
generated samples, and use this field to iteratively update the generator. Let
$p$ denote the real data distribution. A generator $G_\theta:\Reals^{d^\prime}\rightarrow\Reals^d$ maps
some latent $z$ sampled from some easy-to-sample distribution $\mu$ (typically Gaussian) with $z\sim \mu$ to the generated data $\hat{x}=G_\theta(z)$. 
This process induces the generated data distribution $q_\theta=G_{\theta}\#\mu$.
Let $k_\tau:\mathbb R^d\times\mathbb R^d\to(0,\infty)$ be a positive
kernel measuring similarity between samples, with bandwidth parameter
$\tau>0$. We use the Gaussian kernel
\begin{equation}
k_\tau(x,y)
=(\pi\tau)^{-d/2}
e^{-\|x-y\|_2^2/\tau}.
\label{eq:gaussian_kernel}
\end{equation}
where $d$ is the sample dimension. Its normalization constant cancels in
the normalized weights below. A \emph{drifting field} is a vector field
$V_{p,q_{\theta}}:\Reals^d\to\Reals^d$, determined by the pair
$(p,q_{\theta})$, that assigns to every point $x$ a displacement pointing
toward the real data and away from the generated data. It is formed as the
difference of two kernel-weighted local means, one taken over real samples and
one over generated samples, and in practice it is estimated from the samples in
the current batch. Given real samples $\{\tilde{x}_j\}_{j=1}^{N^+}$ and
generated samples $\{\hat{x}_i\}_{i=1}^{N^-}$, we have the empirical
drifting field\footnote{For simplicity, we omit that same distribution KDE estimates exclude the evaluation point.}:
\begin{equation}
\widehat V_{p,q_\theta}(x)
=
\frac{\sum_{j=1}^{N^+}k_\tau(x,\tilde{x}_j)(\tilde{x}_j-x)}
{\sum_{j=1}^{N^+}k_\tau(x,\tilde{x}_j)}
-
\frac{\sum_{i=1}^{N^-}k_\tau(x,\hat{x}_i)(\hat{x}_i-x)}
{\sum_{i=1}^{N^-}k_\tau(x,\hat{x}_i)}.
\label{eq:empirical_drift}
\end{equation}
The first term is the attractive component pulling $x$ toward nearby real
samples, and the second the repulsive component pushing it away from nearby
generated samples. The generator regresses toward
the displaced stop-gradient target through the following drift regression loss
\begin{equation}
\mathcal L^{\text{Drift}}(\theta)
=
\mathbb E_{z\sim\mu}
\left[
\left\|
G_\theta(z)-
\operatorname{sg}\!\left(
G_\theta(z)+\widehat V_{p,q_\theta}(G_\theta(z))
\right)
\right\|_2^2
\right],
\label{eq:drifting_loss}
\end{equation}
where the field is constructed from the current batch and held fixed
during the generator update.

For the Gaussian kernel in Eq.~(\ref{eq:gaussian_kernel}), the drift admits
a score-difference interpretation through Parzen-smoothed densities.
We define the smoothed densities of the real data distribution $p$
and the generated data distribution $q_{\theta}$ as
\begin{equation*}
p_\tau(x)=\mathbb E_{\tilde{x}\sim p}[k_\tau(x,\tilde{x})],
\qquad
q_\tau^{\theta}(x)=\mathbb E_{\hat{x}\sim q_{\theta}}[k_\tau(x,\hat{x})].
\end{equation*}

The population drift is their score difference
\(
V_{p,q_{\theta}}(x)=\frac{\tau}{2}
\left(\nabla\log p_\tau(x)-\nabla\log q_\tau^{\theta}(x)\right).
\)
This is a smoothed approximation of the Wasserstein gradient flow associated
with $\mathrm{KL}(q_{\theta}\|p)$~\citep{grettonWassersteinGradientFlow2026}, whose
velocity is
$V^{\mathrm{KL}}_{p,q_{\theta}}(x)=\nabla\log p(x)-\nabla\log q_{\theta}(x).$

\section{Methodology}
\label{sec:methodology}

\begin{wrapfigure}{r}{0.55\textwidth}
\vspace{-15pt}
\centering
\begin{minipage}{\linewidth}
\hrule
\captionof{algorithm}{Persistent Representation for Drifting Training with
Velocity Clipping}
\label{alg:trainable_encoder_drifting}
\vspace{2pt}
\hrule
\vspace{2pt}
\small
\textbf{Input:} Generator $G_\theta$, encoder $E_\varphi$, real data distribution
$p$, noise distribution $\mu$, kernel $k_\tau$, batch size $B$, encoder steps
$K$, velocity threshold $V_{\max}$, and learning rates
$\gamma_E,\gamma_G$
\begin{algorithmic}[1]
\Repeat
  \For{$k=1,\ldots,K$}
    \State Sample fresh $\{z_i\}_{i=1}^{B}\sim\mu$ and
      $\{\tilde{x}_j\}_{j=1}^{B}\sim p$
    \State $\hat{x}_i\gets\operatorname{sg}(G_\theta(z_i))$,
      $u_i\gets E_\varphi(\hat{x}_i)$, $v_j\gets E_\varphi(\tilde{x}_j)$
    \State Evaluate $s_\varphi^{\mathrm{KDE}}(\tilde{x}_j)$ and
      $s_\varphi^{\mathrm{KDE}}(\hat{x}_i)$
    \State Compute $\widehat{\Lcal}_{\varphi}^D$ using
      Eq.~(\ref{eq:encoder_kde_objective}) and update $\varphi$
  \EndFor
  \State Sample fresh $\{z_i\}_{i=1}^{B}\sim\mu$ and
    $\{\tilde{x}_j\}_{j=1}^{B}\sim p$
  \State $u_i\gets E_\varphi(G_\theta(z_i))$ and
    $v_j\gets E_\varphi(\tilde{x}_j)$
  \State Construct $\widehat V_\varphi(u_i)$
  \State $\widehat V_{\varphi,\mathrm{clip}}(u_i)\gets
    \widehat V_\varphi(u_i)/
    \max(1,\|\widehat V_\varphi(u_i)\|_2/V_{\max})$
  \State $\widetilde u_i\gets
    \operatorname{sg}(u_i+\widehat V_{\varphi,\mathrm{clip}}(u_i))$
  \State $\widehat{\mathcal L}_{\varphi,\theta}^{\text{Drift}}
    \gets B^{-1}\sum_{i=1}^{B}\|u_i-\widetilde u_i\|_2^2$
    \Comment{clipped target}
  \State Update $\theta$ using this loss
\Until{converged}
\end{algorithmic}
\hrule
\end{minipage}
\vspace{-15pt}
\end{wrapfigure}

In Drifting Models, the representation induces the geometry used by the KDE: feature-space distances determine which real and generated samples receive high kernel weights and how they contribute to the drifting update. With a fixed representation, the KDE and drifting field still evolve with the generator distribution, but the geometry used to measure and localize discrepancies remains unchanged. We therefore train the encoder alongside the generator, retaining its parameters across batches while rebuilding the KDE from each current batch. We first characterize how the representation geometry shapes the drifting update, then introduce alternating encoder and generator updates, and finally apply velocity clipping to control the magnitude of the feature-space drift. Algorithm~\ref{alg:trainable_encoder_drifting} summarizes the complete training procedure.

\subsection{KDE-Ratio and Drifting Updates in Representation Space} \label{sec:density_ratio_drift}

We formulate the generator update directly in the representation space used to construct the drifting field, where the corresponding pixel-space update is recovered by taking \(E_\varphi\) to be the identity map. Let
$E_\varphi:\mathbb R^d\rightarrow\mathbb R^{\tilde{d}}$ be an encoder.
We use $k_\tau$ in the dimension of its arguments; in representation space,
the Gaussian normalization in Eq.~(\ref{eq:gaussian_kernel}) is
$(\pi\tau)^{-\tilde d/2}$. We set $N^+=N^-=B$ below.
For a batch of $B$ real samples $\{\tilde{x}_j\}_{j=1}^B$ and generated
samples $\{\hat{x}_i\}_{i=1}^B$, where
$\hat{x}_i=G_\theta(z_i)$ and $z_i\sim\mu$, we hold $\varphi$ fixed during
the generator update.
Their corresponding representations are
$v_j = E_\varphi(\tilde{x}_j)$ and $u_i = E_\varphi(\hat{x}_i)$.
We then construct batch KDEs in the representation space with
$
\widehat p_{\varphi,\tau}(v)
=
\frac{1}{B}\sum_{j=1}^{B} k_\tau(v,v_j)
$ and 
$
\widehat q_{\varphi,\tau}^{\theta}(v)
=
\frac{1}{B}\sum_{i=1}^{B} k_\tau(v,u_i).
$

For any input \(x \in \mathbb{R}^d\), we define its KDE-based log-density-ratio score ~\citep{lemkhenterGenerativeAdversarialLearning2021} in the representation space induced by $E_{\varphi}$ as
\begin{equation}
s_\varphi^{\mathrm{KDE}}(x):=\log \widehat p_{\varphi,\tau}\!\left(E_\varphi(x)\right) - \log \widehat q_{\varphi,\tau}^\theta\!\left(E_\varphi(x)\right).
\end{equation}
This score can serve as a real-versus-generated classification logit
under equal class priors, as detailed in Section~\ref{sec:persistent_representation}.
We define the KDE ratio loss
$
\widehat{\mathcal L}_{\varphi, \theta}^{\text{KDE}}
:=-\frac{\tau}{B}\sum_{i=1}^B
s_\varphi^{\mathrm{KDE}}(G_\theta(z_i))$. 
For a fixed encoder, the feature-space drifting field is
\begin{equation}
\widehat V_\varphi(u)
=\frac{\sum_{j=1}^{B}k_\tau(u,v_j)(v_j-u)}
{\sum_{j=1}^{B}k_\tau(u,v_j)}
-\frac{\sum_{i=1}^{B}k_\tau(u,u_i)(u_i-u)}
{\sum_{i=1}^{B}k_\tau(u,u_i)}\,,
\label{eq:feature_space_drift}
\end{equation}
Then the drift regression loss in the representation space is
\begin{equation}
\widehat{\mathcal L}_{\varphi, \theta}^{\text{Drift}}:=\frac{1}{B}\sum_{i=1}^{B}
\left\|E_{\varphi}(G_\theta(z_i))-\operatorname{sg}(u_i+\widehat V_\varphi(u_i))\right\|_2^2.
\label{eq:persistent_representation_generator_loss}
\end{equation}
During the generator update, the encoder parameters \(\varphi\) are held fixed, while gradients are still propagated through \(E_{\varphi}\) w.r.t. its input. Consequently, the feature-space drifting direction is pulled back to the generator parameters through the Jacobians of the encoder and generator. The following proposition makes this connection explicit by relating the generator gradient of the KDE-based log-density-ratio score to the feature-space drifting field.

\begin{proposition}[Equivalence of representation-space generator gradients]
\label{prop:kde_drifting_gradient}
Let $B\ge1$ and $\tau>0$. For each sampled input $z_i$, let
$\hat{x}_i=G_\theta(z_i)$ and $u_i=E_\varphi(\hat{x}_i)$.
Assume that $G_\theta$ is differentiable w.r.t. $\theta$ at each
$z_i$ and that $E_\varphi$ is differentiable w.r.t. its input at
each $\hat{x}_i$. Let $J_E(x)=\partial E_\varphi(x)/\partial x$ and
$J_G(z)=\partial G_\theta(z)/\partial\theta$.
Use the Gaussian kernel in Eq.~(\ref{eq:gaussian_kernel}), with the same
bandwidth and feature anchors in the KDE ratio loss
$\widehat{\mathcal L}_{\varphi,\theta}^{\text{KDE}}$ and the drift regression loss
$\widehat{\mathcal L}_{\varphi,\theta}^{\text{Drift}}$ defined above.
Hold $\varphi$ and the feature anchors fixed, differentiating the KDE ratio
loss only through the generated queries $E_\varphi(G_\theta(z_i))$.
With no velocity clipping, the gradients evaluated at the current parameters
used to construct the detached regression targets satisfy
\begin{equation*}
\nabla_\theta\widehat{\mathcal L}_{\varphi,\theta}^{\text{KDE}}
=\nabla_\theta\widehat{\mathcal L}_{\varphi,\theta}^{\text{Drift}}
=-\frac{2}{B}\sum_{i=1}^B
J_G(z_i)^\top J_E(\hat{x}_i)^\top\widehat V_\varphi(u_i).
\label{eq:generator_gradient_equivalence}
\end{equation*}
\end{proposition}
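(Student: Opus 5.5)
The plan is to compute both gradients by the chain rule through the query $u_i=E_\varphi(G_\theta(z_i))$ and check that each reduces to the same feature-space vector contracted with $J_E(\hat x_i)J_G(z_i)$.

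\textbf{Drift regression loss.} This side is immediate. The target $\operatorname{sg}(u_i+\widehat V_\varphi(u_i))$ is constant in $\theta$, so
\[
\nabla_\theta\|u_i-\operatorname{sg}(\cdot)\|_2^2=2J_G(z_i)^\top J_E(\hat x_i)^\top\bigl(u_i-u_i-\widehat V_\varphi(u_i)\bigr)
\]
when evaluated at the current parameters, where the target equals $u_i+\widehat V_\varphi(u_i)$. This gives $-2J_G^\top J_E^\top\widehat V_\varphi(u_i)$. Averaging over $i$ yields the claimed right-hand side.

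\textbf{KDE ratio loss.} Treat the anchors $\{v_j\}$ and $\{u_{i'}\}$ inside $\widehat p_{\varphi,\tau}$ and $\widehat q^\theta_{\varphi,\tau}$ as fixed constants, differentiating only in the query slot $u$. For the Gaussian kernel, $\nabla_u k_\tau(u,w)=\tfrac{2}{\tau}k_\tau(u,w)(w-u)$. Hence
\[
\nabla_u\log\widehat p_{\varphi,\tau}(u)=\tfrac{2}{\tau}\,\frac{\sum_j k_\tau(u,v_j)(v_j-u)}{\sum_j k_\tau(u,v_j)},
\]
and similarly for $\widehat q$. The normalization $(\pi\tau)^{-\tilde d/2}$ and the $1/B$ factors cancel in the logarithmic derivative. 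Subtracting gives $\nabla_u s(u)=\tfrac{2}{\tau}\widehat V_\varphi(u)$, exactly the field of Eq.~(\ref{eq:feature_space_drift}). Multiplying by $-\tau/B$ and applying the chain rule through $u_i$ gives $-\tfrac{2}{B}\sum_i J_G(z_i)^\top J_E(\hat x_i)^\top\widehat V_\varphi(u_i)$. This matches the drift gradient.

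\textbf{Main obstacle.} The delicate point is bookkeeping rather than analysis. In $\widehat q^\theta_{\varphi,\tau}$, the anchors $u_{i'}$ themselves depend on $\theta$, and one of them coincides with the query $u_i$. The statement's convention of holding anchors fixed is what removes these extra terms, so I will state explicitly that anchors are detached. If the self-term $i'=i$ is kept, it contributes $k_\tau(u_i,u_i)(u_i-u_i)=0$ to the numerator. It is therefore harmless to the field, and it does not matter whether self-exclusion (the footnote convention) is used, provided both losses use the same convention.

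I will also note that the equality requires evaluating at the current $\theta$, where the detached target was built. Away from that point, the regression gradient involves $u_i(\theta)-u_i(\theta_0)-\widehat V$ and no longer matches. Differentiability assumptions enter only to justify the chain rule at each $z_i$ and $\hat x_i$.
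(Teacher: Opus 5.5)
Your proposal is correct and follows the paper's proof essentially step for step. Holding the anchors fixed, you compute the query-slot log-KDE gradients to get $\tau\nabla_u\ell_\varphi(u)=2\widehat V_\varphi(u)$, pull this back through $J_E(\hat x_i)J_G(z_i)$, and use the fact that the detached target leaves residual $-\widehat V_\varphi(u_i)$ at the current parameters.

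One small correction concerns your side remark on the self-term. A retained $k_\tau(u_i,u_i)$ contributes zero to the numerator but not to the denominator, so it does change the normalization of the repulsive term; it is not harmless to the field. Your actual conclusion still holds: the identity is valid under either convention, provided both losses use the same one.
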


The proof is given in Appendix~\ref{app:proof_kde_drifting}.
Pixel-space drifting is the special case $E_\varphi=\mathrm{Id}$;
Appendix~\ref{app:pixel_kde_drifting} gives its original formulation and
compares the two cases. Appendix~\ref{app:gan_critic_connection}
discusses how this KDE-based classification score relates to density-ratio
estimates induced by GAN critics
\citep{yiMonoFlowRethinkingDivergence2023}.

\textbf{Learning a discriminative geometry for generator supervision.}
Proposition~\ref{prop:kde_drifting_gradient} shows that the same KDE score
used to distinguish real and generated samples also supplies the generator
supervision in unclipped drifting. The encoder changes both the kernel
weights defining this score and the gradients propagated to the generator.
This motivates training $E_\varphi$ to learn a discriminative geometry,
rather than fixing the geometry throughout training. We optimize the KDE
classification objective introduced next to expose current real--generated
discrepancies, then update the generator using the corresponding drift.
The gradient identity does not imply that every representation improves on
pixels, or that stronger batch discrimination guarantees better generation;
Figures~\ref{fig:toy_field_alignment} and
\ref{fig:representation_fid_kde_diagnostics} examine drift alignment,
KDE discrimination, and generation quality empirically.

\subsection{Learning Discriminative Geometry through Persistent Representations}
\label{sec:persistent_representation}

\textbf{Persistent Representation Learning.} Standard drifting constructs sample relationships in a fixed representation,
so the discriminative geometry cannot adapt during
training. We now train $E_\varphi$ and retain its parameters across batches
throughout training. This encoder is used only to define the representation
space for drifting, while $G_\theta$ continues to output images in pixel space. The KDE estimates and drifting field are reconstructed from each
current batch. This allows earlier batch samples to
shape subsequent discriminative geometry without storing them.
Section~\ref{sec:ablations} evaluates persistence in representation against maintaining
persistence in a scalar critic or both. In our from-scratch experiments, $E_\varphi(x):=x+R_\varphi(x)$ with
$R_\varphi:\mathbb{R}^d\to\mathbb{R}^d$ preserves the pixel dimension $d$,
which allows $E_\varphi$ to be initialized exactly as the identity mapping;
$R_\varphi$ is implemented as a lightweight variant of a Dilated Residual
Network (DRN)~\citep{yuDilatedResidualNetworks2017}, with architecture details
in Appendix~\ref{app:representation_architecture}.

\textbf{Learning a Discriminative Representation.} The representation determines the pairwise distances and kernel weights entering the KDE estimates, and therefore directly affects the density-ratio estimate from which the drifting field is subsequently constructed. We train $E_\varphi$ so that the KDE comparison
used by Drifting Models more clearly exposes differences between current real
and generated samples.
Before each generator update, we perform $K$ encoder
updates using fresh real and generated batches, with
$\hat{x}_i=\operatorname{sg}(G_\theta(z_i))$. For these encoder updates, we use
$s_\varphi^{\mathrm{KDE}}$ defined in Section~\ref{sec:density_ratio_drift}.
This score uses no additional trainable scalar head, and its gradient
propagates through the KDE computation to the encoder.

To classify real and generated samples, we use equal class priors, since
each classification batch contains equal numbers of equally weighted samples
from the two classes. By Bayes' rule, the
KDE-based posterior in representation space is
\begin{equation*}
\widehat P(y=\mathrm{real}\mid u)
=\frac{\widehat p_{\varphi,\tau}(u)}
{\widehat p_{\varphi,\tau}(u)+\widehat q_{\varphi,\tau}^{\theta}(u)}
=\sigma\!\left(s_\varphi^{\mathrm{KDE}}(x)\right),
\end{equation*}
where $u=E_\varphi(x)$ and $\sigma(t)=(1+e^{-t})^{-1}$. Thus, the score serves as a
classification logit: positive values favor real samples and negative values
favor generated samples. Using this posterior, we train the encoder with binary
cross-entropy, summed over the two class-wise averages:
\begin{equation}
\widehat{\Lcal}_{\varphi}^D
:=-\frac{1}{B}\sum_{j=1}^{B}
\log\sigma\!\left(s_\varphi^{\mathrm{KDE}}(\tilde{x}_j)\right)
-\frac{1}{B}\sum_{i=1}^{B}
\log\!\left(1-\sigma\!\left(s_\varphi^{\mathrm{KDE}}(\hat{x}_i)\right)\right).
\label{eq:encoder_kde_objective}
\end{equation}
This objective encourages higher KDE scores for real samples and lower
scores for generated samples, with the aim of learning a more discriminative
geometry for constructing the drifting field. 

This phase updates only the
representation. The generator continues to learn from an explicitly
constructed drifting field. After $K$ representation updates, we freeze $\varphi$ and draw fresh
real samples and noise variables. We then construct the feature-space drift
in Eq.~(\ref{eq:feature_space_drift}) and update the generator using
Eq.~(\ref{eq:persistent_representation_generator_loss}), with the velocity
control described below.

\textbf{Controlling the Representation-Induced Drift Velocity.} The density-ratio connection further allows us to characterize the
magnitude of the induced drift. In particular, the local regularity of the
density-ratio potential bounds the corresponding drift velocity. We use this
relation to motivate a direct constraint on the velocity applied in the
generator update.

For the Gaussian kernel, let
$\ell_\varphi(u)=\log\widehat p_{\varphi,\tau}(u)-\log\widehat q_{\varphi,\tau}^{\theta}(u)$
denote the feature-space log-density ratio, so that
$s_\varphi^{\mathrm{KDE}}(x)=\ell_\varphi(E_\varphi(x))$.
With the anchors fixed, the identity underlying
Proposition~\ref{prop:kde_drifting_gradient} gives
$\widehat V_\varphi(u)=\frac{\tau}{2}\nabla_u\ell_\varphi(u)$.
Thus, learning the representation changes both the direction and magnitude
of the drift. If $\|\nabla_u\ell_\varphi(u)\|_2\le L$ in a region, then
$\|\widehat V_\varphi(u)\|_2\le\tau L/2$ there. This motivates directly
bounding the feature-space displacement used in the drift regression target.
The analogy with Lipschitz control of GAN critics is discussed in
Appendix~\ref{app:gan_critic_connection}.

This relation suggests two forms of control: spectral normalization of the
encoder indirectly constrains local representation changes, whereas a direct
constraint acts on the feature-space drift received by the generator. We use
the latter and compare the two approaches in
Section~\ref{sec:ablations}. Given a velocity
budget $V_{\max}>0$, we define
\begin{equation}
\widehat V_{\varphi,\mathrm{clip}}(u)
=
\widehat V_\varphi(u)/
\max\!\left(1,\|\widehat V_\varphi(u)\|_2/V_{\max}\right).
\label{eq:l2_drift_projection}
\end{equation}
This radial projection maps zero drift to zero, preserves the direction of
nonzero drift, and guarantees
$\|\widehat V_{\varphi,\mathrm{clip}}(u)\|_2\le V_{\max}$. The generator subsequently uses
$\widetilde u=\operatorname{sg}(u+\widehat V_{\varphi,\mathrm{clip}}(u))$ as its
regression target. This bound controls the feature-space displacement; the
generator parameter gradient also depends on the encoder and generator
Jacobians.

\begin{figure*}[t]
\centering
\includegraphics[width=\textwidth]{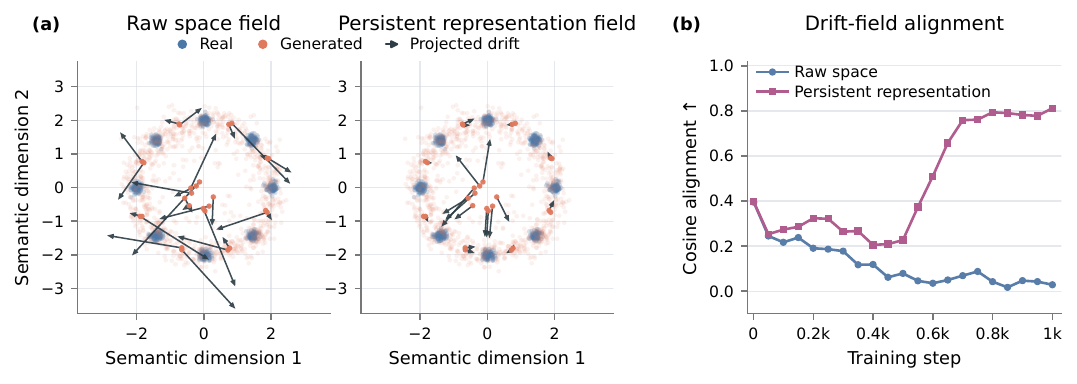}
\caption{\textbf{Persistent Representation Learning Improves Drift Alignment.}
\textbf{(a)} Drift field constructed in the raw space is disorganized by the nuisance coordinates, while the
persistent representation field aligns with the ring shaped target
structure. \textbf{(b)} Persistent representation space field reaches
substantially higher alignment with the reference field than raw space as training progresses.}
\label{fig:toy_field_alignment}
\end{figure*}

\section{Experiments}
\label{sec:experiments}

To evaluate the proposed method and investigate the factors that shape its performance, we organize our experiments around three research questions. 
\textbf{RQ1:} How does representation geometry shape drift estimation, and what are its consequences for KDE discrimination and sample quality?
\textbf{RQ2:} Can persistent representation learning enable effective drifting
without pretrained features?
\textbf{RQ3:} How do persistence in different components and drift velocity control affect sample quality?
We address these questions through mechanistic analysis, benchmark
evaluations, and ablation studies.

Following DriftXpress~\citep{falahatiDriftXpressFasterDrifting2026}, we use
U-Net~\citep{ronnebergerUNetConvolutionalNetworks2015} as the generator and conduct experiments on SVHN~\citep{netzerReadingDigitsNatural2011},
CIFAR-10~\citep{krizhevskyLearningMultipleLayers2009}, and CIFAR-100~\citep{krizhevskyLearningMultipleLayers2009}. We report
FID and Inception Score (IS) on 50K generated images, with detailed architectures, optimization
settings, and training configurations provided in
Appendix~\ref{sec:implementation_details}. We use \emph{standard drifting}
to denote drifting directly in pixel space. Our experiments
focus on isolating the effect of persistent representation learning relative
to standard drifting, rather than reproducing computationally intensive large-scale
results in ~\citet{dengGenerativeModelingDrifting}.

\begin{figure*}[!t]
\centering
\includegraphics[width=\textwidth]{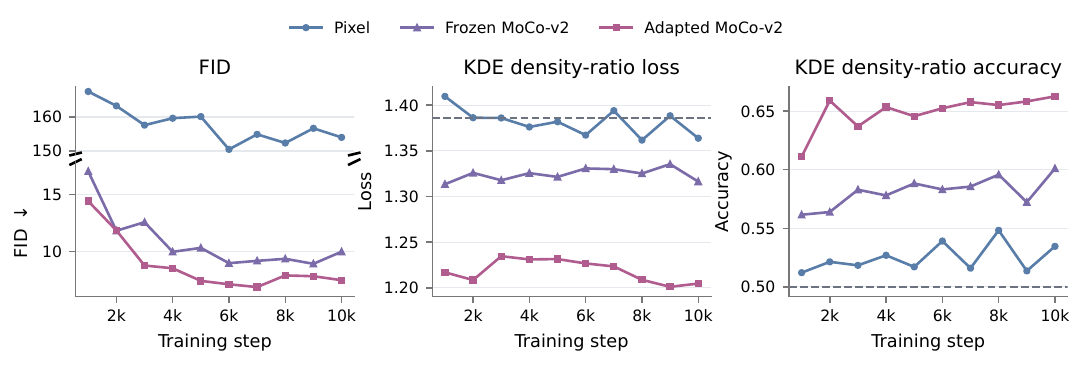}
\caption{\textbf{Persistent Representations Improve Discrimination and Generation.}
Pixel-space KDE discrimination remains near chance, while a frozen MoCo-v2
representation improves both discrimination and sample quality, and persistent
adaptation improves them further. Dashed lines indicate the loss for indistinguishable distributions and chance-level accuracy, respectively.}
\label{fig:representation_fid_kde_diagnostics}
\end{figure*}

\subsection{Mechanistic Analysis}
\label{sec:mechanistic_analysis}

\textbf{Learning Discriminative Geometry through Persistent Representations.} To directly evaluate how representation geometry affects drift estimation, we
use a controlled toy example with eight Gaussian modes in a known
two-dimensional semantic space and 30 Gaussian nuisance dimensions; full
settings are given in Appendix~\ref{sec:implementation_details}.
Figure~\ref{fig:toy_field_alignment}(a) compares drift fields computed on the
same generated samples, with kernel weights determined in the raw space and learned representation space.
The raw-space field varies sharply across nearby samples, whereas the learned representation produces a more coherent field around the target modes.
Figure~\ref{fig:toy_field_alignment}(b) further shows substantially higher
cosine alignment with a reference KDE drift field computed using only the two
semantic coordinates, where alignment is measured between the projected
estimated drift field and the reference drift field at the same generated samples.
These results illustrate how a persistent representation can improve batch
drift estimation when raw distances are affected by nuisance variation.

\begin{figure}[!t]
\centering
\includegraphics[width=0.90\textwidth]{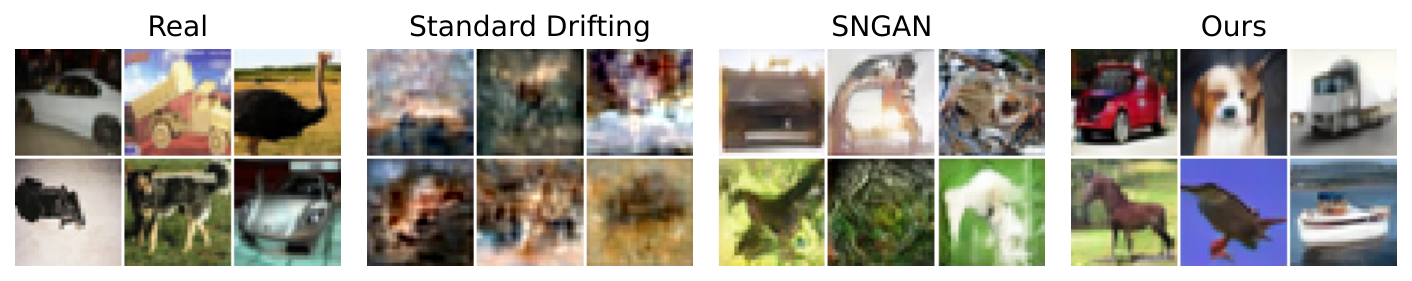}
\caption{\textbf{Qualitative Comparisons on CIFAR-10.} Our method produces substantially clearer and more recognizable
samples than standard drifting, with more coherent object structure than
SNGAN.}
\label{fig:generated_samples_comparison}
\vspace{-0.5cm}
\end{figure}

\textbf{Effect of Better Geometry for Discrimination with Persistent Representation.} We next examine how representation geometry affects KDE discrimination and
sample quality on image data. We compare three representations for constructing
the drifting field on CIFAR-10:
raw pixels, a frozen MoCo-v2 encoder
~\citep{heMomentumContrastUnsupervised2020,chenImprovedBaselinesMomentum2020},
and the same encoder with its final residual block adapted by persistent
representation learning. Alongside FID, we report the encoder objective
$\widehat{\Lcal}_{\varphi}^{D}$ in Eq.~(\ref{eq:encoder_kde_objective}) as the
KDE classification loss and compute the corresponding classification accuracy by
assigning positive scores to real samples and negative scores to generated
samples.

Figure~\ref{fig:representation_fid_kde_diagnostics} shows that pixel-space
drifting remains ineffective and KDE density-ratio
accuracy close to chance; Appendix~\ref{app:pixel_bandwidth_sweep} confirms that
this is not due to an untuned kernel bandwidth. A frozen MoCo-v2 representation
improves the FID, together with higher KDE density-ratio accuracy
and lower loss, while adapting its final residual block further improves the
FID and discrimination. These results show that the representation significantly affects both KDE discrimination and sample quality.

\begin{figure*}[!t]
\centering
\includegraphics[width=\textwidth]{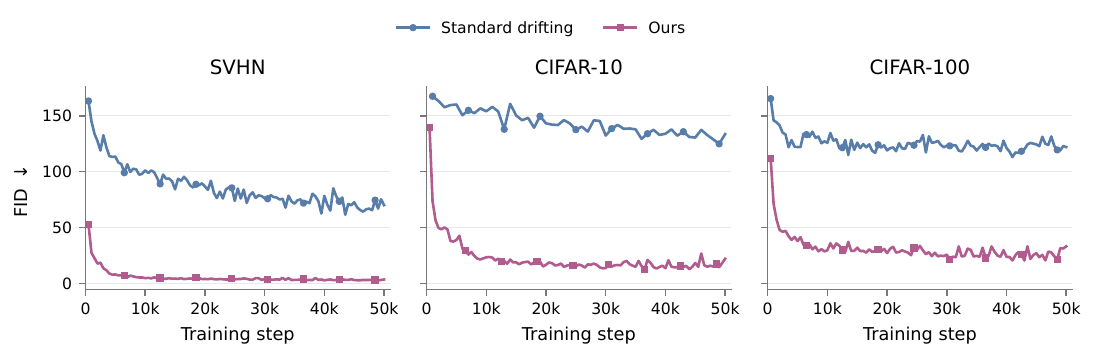}
\caption{\textbf{FID over Training Steps.}
Our method converges to substantially lower FID than standard drifting on
SVHN, CIFAR-10, and CIFAR-100.}
\label{fig:main_results_triptych}
\vspace{-0.5cm}
\end{figure*}

\subsection{Main Benchmark Results}
\label{sec:main_results}

We compare our method with standard drifting to evaluate whether it enables
effective drifting without pretrained features. We additionally include
SNGAN~\citep{miyatoSpectralNormalizationGenerative2018} as an adversarial
reference, using a widely used open-source
implementation.\footnote{\href{https://github.com/christiancosgrove/pytorch-spectral-normalization-gan}{https://github.com/christiancosgrove/pytorch-spectral-normalization-gan}}

\textbf{Quantitative Results.}
Our method consistently improves standard drifting across all three datasets.
Figure~\ref{fig:main_results_triptych} and Table~\ref{tab:main_results} show
substantial improvements in both FID and IS on SVHN, CIFAR-10, and CIFAR-100.
Our method also achieves better performance than SNGAN across all three datasets.

\textbf{Qualitative Results.}
Figure~\ref{fig:generated_samples_comparison} shows that our method produces
substantially clearer and more recognizable samples than standard drifting,
with more coherent object structure than SNGAN. Appendix~\ref{app:additional_qualitative_results} provides additional SVHN
samples, Appendix~\ref{app:learned_geometry_evolution} tracks the learned
representation across checkpoints, and Appendix~\ref{app:training_cost}
confirms the gain at equal training time.

\begin{table}[H]
\centering
\caption{\textbf{Quantitative Comparisons.}
Our method substantially improves both FID and IS over standard drifting on
SVHN, CIFAR-10, and CIFAR-100, and also outperforms SNGAN.}
\label{tab:main_results}
\small
\setlength{\tabcolsep}{4pt}
\begin{tabular}{lrrrrrr}
\toprule
& \multicolumn{2}{c}{SVHN}
& \multicolumn{2}{c}{CIFAR-10}
& \multicolumn{2}{c}{CIFAR-100} \\
\cmidrule(lr){2-3}\cmidrule(lr){4-5}\cmidrule(lr){6-7}
Method & IS $\uparrow$ & FID $\downarrow$
       & IS $\uparrow$ & FID $\downarrow$
       & IS $\uparrow$ & FID $\downarrow$ \\
\midrule
Standard drifting & 2.91 & 61.56 & 3.18 & 124.94 & 3.46 & 113.14 \\
Ours           & \textbf{3.24} & \textbf{2.88}
               & \textbf{8.69} & \textbf{12.81}
               & \textbf{8.52} & \textbf{20.61} \\
\midrule
SNGAN & 3.04 & 13.16 & 7.22 & 28.59 & 7.96 & 30.33 \\
\bottomrule
\end{tabular}
\end{table}

\subsection{Ablation Studies}
\label{sec:ablations}

\textbf{Effect of Persistence in Different Components.} To study where cross-batch persistence is most effective, we compare four
configurations: no persistent component, persistent representation only,
persistent scalar critic only, and persistence in both the representation and
the scalar critic. As shown in Figure~\ref{fig:persistence_placement_ablation}, within 10k training
steps on CIFAR-10~\citep{krizhevskyLearningMultipleLayers2009}, persistent
representation substantially improves performance over no persistence, whereas introducing persistence in the
scalar critic leads to markedly worse results.

\begin{figure*}[!t]
\centering
\includegraphics[width=0.96\textwidth]{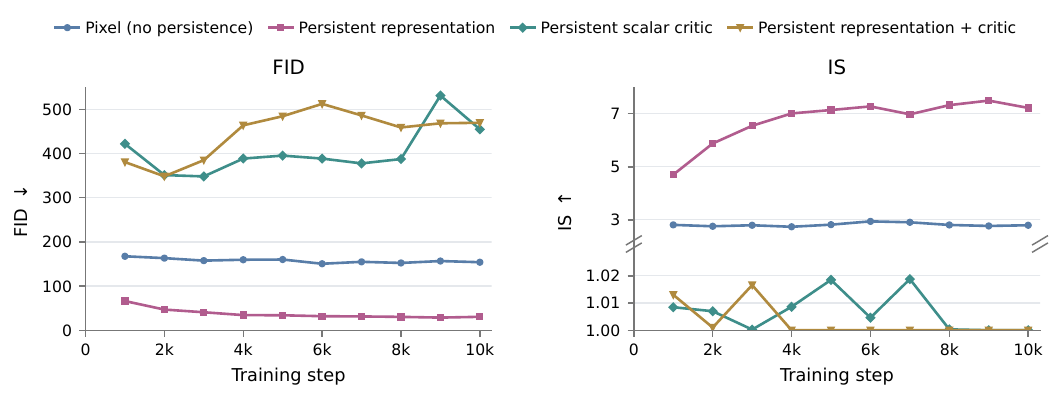}
\caption{\textbf{Effect of Persistence in Different Components.}
Among the four configurations, placing persistence only in the representation
achieves substantially lower FID and higher IS than the other configurations,
whereas introducing a persistent scalar critic degrades performance.}
\label{fig:persistence_placement_ablation}
\end{figure*}

\textbf{Effect of Lipschitz Regularization.} Finally, we study whether Lipschitz regularization should be applied to the
trainable residual encoder or directly to the drifting velocity. The former
uses spectral normalization for every convolution in the encoder, whereas the
latter clips the velocity that updates the generator. Table~\ref{tab:velocity_budget_ablation}
reports the best IS and FID within 10k steps on CIFAR-10. Spectral normalization
degrades performance relative to the unconstrained model, whereas velocity clipping improves both
metrics for all evaluated budgets, with $V_{\max}=0.005$ achieving the best performance. These results support constraining the explicit
velocity rather than the encoder; Appendix~\ref{app:velocity_clip_before_clip}
shows the pre-clipping velocity distribution and the locations of the evaluated
budgets. Table~\ref{tab:persistence_velocity_factorial} further separates the effects of
persistent representation and velocity clipping. Persistent representation 
provides the dominant improvement, while velocity clipping yields additional
gains.

\begin{table*}[!t]
\centering
\begin{minipage}[t]{0.51\textwidth}
\captionof{table}{\textbf{Effect of Lipschitz Regularization.}
Velocity clipping outperforms spectral normalization.}
\label{tab:velocity_budget_ablation}
\centering
\small
\begin{tabular}{lrr}
\toprule
Lipschitz regularization & IS $\uparrow$ & FID $\downarrow$ \\
\midrule
Unconstrained                          & 7.50 & 27.41 \\
Encoder spectral normalization         & 7.23 & 31.82 \\
\midrule
\multicolumn{3}{l}{Velocity clipping} \\
\quad $V_{\max}=1$       & 7.53 & 25.72 \\
\quad $V_{\max}=0.5$     & 7.75 & 22.19 \\
\quad $V_{\max}=0.05$    & 7.66 & 23.47 \\
\quad $V_{\max}=0.005$   & \textbf{7.80} & \textbf{21.57} \\
\quad $V_{\max}=0.0005$  & 7.67 & 22.85 \\
\bottomrule
\end{tabular}
\end{minipage}\hfill
\begin{minipage}[t]{0.47\textwidth}
\captionof{table}{\textbf{Factorial Ablation.}
Persistent representation provides most of the improvement, while velocity
clipping further improves performance. Velocity clipping uses $V_{\max}=1.5$ in pixel space.}
\label{tab:persistence_velocity_factorial}
\centering
\small
\begin{tabular}{ccrr}
\toprule
\shortstack{Persistence\\representation} &
\shortstack{Velocity\\clipping} &
IS $\uparrow$ & FID $\downarrow$ \\
\midrule
$\times$     & $\times$     & 2.94 & 150.49 \\
$\times$     & $\checkmark$ & 3.44 & 126.33 \\
$\checkmark$ & $\times$     & 7.50 & 27.41 \\
$\checkmark$ & $\checkmark$ & \textbf{7.80} & \textbf{21.57} \\
\bottomrule
\end{tabular}
\end{minipage}
\end{table*}

\section{Conclusion}
\label{sec:conclusion}

We studied whether Drifting Models can learn the representation geometry
required for effective drifting without relying on fixed pretrained features.
Our analysis identifies a representation
bottleneck: the geometry determines kernel weights and drift directions, while
fixed pixel space provides weak discrimination performance.
We addressed this bottleneck with persistent representation learning,
which continuously updates the representation while retaining information from
successive discriminative updates, so that the geometry used to construct the
drifting field evolves together with the generator. We further connect the KDE
density-ratio potential to the empirical drifting field. 
This connection relates the local regularity of the potential to
the drifting velocity and motivates velocity clipping. Experiments show that
our method improves sample quality significantly without pretrained encoders, and that
adapting pretrained encoders and velocity clipping yield further gains. 

\subsection*{AI use statement}

We used generative AI tools to assist with the implementation and editing of
limited portions of the experimental code. All AI-assisted code was reviewed,
verified, and tested by the authors before use. Generative AI tools were not
used to generate synthetic datasets, develop theoretical models or conceptual
frameworks, formulate or prove mathematical claims, design the research
methodology or experiments, perform data analysis, or interpret the results.

We also used generative AI tools to improve the grammar, clarity, and phrasing
of the paper. We reviewed all AI-assisted work and take responsibility for the
final content of this work, including all text, claims, code, results, and
artifacts produced with the aid of generative AI.

\subsection*{Ethics statement}

This work does not involve human subjects or release a new dataset; all
experiments use previously released datasets. As a method for improving
existing generative models, our approach is potentially dual-use and could be
applied to generate misleading, harmful, or inappropriate content. It may
also inherit or amplify biases, privacy risks, and other limitations of the
underlying data and pretrained models. Appropriate safeguards and compliance
with the licenses and intended-use conditions of the relevant models and
datasets should therefore accompany its deployment.

\subsection*{Reproducibility statement}

We provide detailed experimental configurations for reproducibility in
Appendix~\ref{sec:implementation_details}.
Appendix~\ref{app:kde_drift_derivations} provides the derivation of the
matched KDE ratio and drift regression loss gradient identity and discusses
the GAN connection. Appendix~\ref{app:generator_updates_comparison} provides
the generator-update pseudocode.

\begin{ack}
Doudou Zhang, Wenwen Hou, Yilin Chen and Qi Chen are supported by research
funding from the ELLIS Institute Finland and the Department of Computer Science
at Aalto University. We acknowledge CSC-IT Center for Science, Finland, for
providing access to the supercomputers Roihu and LUMI, owned by the European
High Performance Computing Joint Undertaking (EuroHPC JU) and hosted by CSC
Finland in collaboration with the LUMI consortium. We also acknowledge the
computational resources provided by the Aalto Science-IT project through the
Triton cluster.
\end{ack}

\bibliography{iclr2027_conference}
\bibliographystyle{plainnat}

\clearpage
\appendix
\section*{Appendix}
\addcontentsline{toc}{section}{Appendix}

\section{Implementation Details}
\label{sec:implementation_details}

Table~\ref{tab:impl_all_configs} summarizes the configurations and
hyperparameters for the main benchmark (Table~\ref{tab:main_results}) and the
ablation studies (Section~\ref{sec:ablations}).

\begin{table}[h]
\centering
\caption{\textbf{Configurations for the Main Benchmark and Ablation Studies.}}
\label{tab:impl_all_configs}
\scriptsize
\setlength{\tabcolsep}{3pt}
\renewcommand{\arraystretch}{1.05}
\begin{tabular}{@{}p{0.22\linewidth}|C{0.24\linewidth}|C{0.24\linewidth}|C{0.24\linewidth}@{}}
\toprule
& \textbf{Ablation default} & \textbf{Standard drifting} & \textbf{Ours} \\
\midrule
\rowcolor[gray]{0.9} \multicolumn{4}{l}{\textit{\textbf{Generator}}} \\
Architecture & \multicolumn{3}{c}{U-Net} \\
Base channels, mult. & \multicolumn{3}{c}{$128$, $(1,2,2,2)$} \\
Res.\ blocks / attn.\ res.\ & \multicolumn{3}{c}{$2$ per stage; $16{\times}16$} \\
\midrule
\rowcolor[gray]{0.9} \multicolumn{4}{l}{\textit{\textbf{Generator Optimizer}}} \\
Optimizer & \multicolumn{3}{c}{AdamW} \\
Betas & \multicolumn{3}{c}{$(0.9,0.999)$} \\
Weight decay & \multicolumn{3}{c}{$0$} \\
Learning rate & \multicolumn{3}{c}{$2\times10^{-4}$} \\
Batch size (real \& generated) & \multicolumn{3}{c}{$1{,}500$ per GPU} \\
\midrule
\rowcolor[gray]{0.9} \multicolumn{4}{l}{\textit{\textbf{Encoder}}} \\
Architecture & DRN-like (App.~\ref{app:representation_architecture}) & -- & DRN-like \\
\midrule
\rowcolor[gray]{0.9} \multicolumn{4}{l}{\textit{\textbf{Encoder Optimizer}}} \\
Optimizer & AdamW & -- & AdamW \\
Betas & $(0.9,0.999)$ & -- & $(0.9,0.999)$ \\
Weight decay & $0$ & -- & $0$ \\
Learning rate & $10^{-4}$ & -- & $10^{-4}$ \\
Batch size (real \& generated) & $1{,}500$ per GPU & -- & $1{,}500$ per GPU \\
Update schedule & 1 step, per generator step & -- & 1 step, per generator step \\
\midrule
\rowcolor[gray]{0.9} \multicolumn{4}{l}{\textit{\textbf{Loss Computation}}} \\
Mechanism & \multicolumn{3}{c}{stateless batch KDE} \\
KDE temperature & \multicolumn{3}{c}{$0.05$} \\
\midrule
\rowcolor[gray]{0.9} \multicolumn{4}{l}{\textit{\textbf{Regularization}}} \\
$V_{\max}$ & $0.005$ & -- & $0.005$ \\
\midrule
\rowcolor[gray]{0.9} \multicolumn{4}{l}{\textit{\textbf{Training}}} \\
GPUs & \multicolumn{3}{c}{$4\times$ NVIDIA GH200} \\
Training steps & $10{,}000$ & $50{,}000$ & $50{,}000$ \\
\bottomrule
\end{tabular}
\end{table}

\subsection{Additional Configuration Settings}

\textbf{Toy Experiment Setting.} The semantic variable in the nuisance-dimension toy experiment
(Section~\ref{sec:mechanistic_analysis}) follows a ring of eight isotropic
Gaussians with radius $2.0$ and standard deviation $0.08$. Each sample is
paired with an independent 30-dimensional standard Gaussian nuisance vector,
scaled by $1.0$ and concatenated with the two semantic dimensions to
form the 32-dimensional observation used to construct the drift field. The
generator is a three-hidden-layer MLP with 256
hidden units and SiLU activations and produces 32-dimensional samples from
Gaussian noise. The persistent representation uses
an identity-initialized residual MLP with one 64-unit hidden layer,
following the same residual parameterization as the DRN encoder. The
generator and the representation both use AdamW with a learning rate of
$10^{-3}$, weight decay $0$, and a batch size of $256$, updating the
representation once per generator step over $1{,}000$ steps with a KDE
temperature of $0.5$. For the alignment diagnostic in Figure~\ref{fig:toy_field_alignment}(b),
we project the estimated drift onto the two-dimensional semantic space and
compare its direction with a reference field constructed from independent
sets of 4,096 real and generated samples in that space.

\textbf{Persistent Scalar Critic Setting.} The persistent critic uses the SNGAN discriminator
architecture~\citep{miyatoSpectralNormalizationGenerative2018} with 128 base
channels, trained with AdamW using a logistic objective over real and
generated samples, a learning rate of $10^{-4}$, a batch size of
$4\times1{,}500$, one update per generator step, and gradients clipped to a
norm of $1$. Its
scaled output supplies the drift for the drift regression loss through the gradient with
respect to the input, following the same construction used throughout the
paper. In the joint configuration, the critic is applied to the same
DRN-style encoder described in Appendix~\ref{app:representation_architecture},
trained jointly under the same optimizer settings.

\textbf{MoCo-v2 Adaptation Setting.} We initialize a ResNet-50 backbone from the official 800-epoch MoCo v2
checkpoint~\citep{heMomentumContrastUnsupervised2020,chenImprovedBaselinesMomentum2020}
pretrained on ImageNet~\citep{dengImageNetLargescaleHierarchical2009}, resizing
inputs to $112\times112$ with standard ImageNet normalization. The frozen
configuration keeps the backbone fixed, while the adapted configuration
unfreezes only the final residual block, updated with the same persistent
schedule as the DRN encoder using AdamW with a learning rate of $10^{-6}$,
weight decay $0$, a batch size of $4\times1{,}500$, and gradients clipped to
a norm of $1$.

\subsection{Persistent Representation Architecture}
\label{app:representation_architecture}

Persistent representation learning accommodates encoder architectures tailored
to the data modality. We explored several common choices, including ResNet-style
convolutional networks~\citep{heDeepResidualLearning2016},
U-Nets~\citep{ronnebergerUNetConvolutionalNetworks2015}, and Vision
Transformers~\citep{dosovitskiyImageWorth16x162021}. For the main experiments,
we use a lightweight DRN variant. Although simple, this
architecture is sufficient to provide an effective learned representation for
drifting.

Our DRN-style encoder preserves the input resolution throughout the network
and uses the residual parameterization
$E_\varphi(x)=x+R_\varphi(x)$, illustrated in
Figure~\ref{fig:persistent_residual_encoder}. The residual branch begins with a $3\times3$
convolution from three input channels to 128 hidden channels, followed by four
constant-width residual blocks with dilation rates $(1,2,4,1)$ and SiLU
activations. Each block contains two $3\times3$ convolutions and an identity
skip connection. A final $3\times3$ projection maps the hidden features back
to three channels. We use neither spatial downsampling nor normalization
layers. Initializing the final projection to zero gives
$E_{\varphi_0}(x)=x$ exactly, while subsequent training modifies the
representation through the learned residual.
As in Drifting
Models~\citep{dengGenerativeModelingDrifting,falahatiDriftXpressFasterDrifting2026}, the drifting objective can
also extend to multiple feature groups, which we extract from the input, the
stem output, each residual-block output, and the final encoded image.

\begin{figure}[!t]
\centering
\begin{tikzpicture}[
  >=Latex,
  every node/.style={font=\normalsize},
  block/.style={draw, rounded corners, minimum width=34mm,
    minimum height=10mm, align=center},
  sum/.style={draw, circle, inner sep=0pt, minimum size=7mm}
]
\node (input) at (0,3.45) {$x$};
\coordinate (fork) at (0,3.05);
\node[block] (residual) at (0,1.95)
  {Trainable mapping\\$R_\varphi$};
\node[sum] (sum) at (0,0.75) {$+$};
\node (output) at (0,-0.20) {$E_\varphi(x)$};
\draw[->] (input) -- (fork);
\fill (fork) circle (1.2pt);
\draw[->] (fork) -- (residual);
\draw[->] (residual) -- (sum);
\draw[->] (fork) -- (2.20,3.05) -- (2.20,0.75) -- (sum);
\draw[->] (sum) -- (output);
\end{tikzpicture}
\caption{\textbf{Persistent Representation Learning with a Residual
Parameterization.}}
\label{fig:persistent_residual_encoder}
\end{figure}

\newpage

\section{Additional Analyses}

\subsection{Evolution of the Learned Representation Geometry}
\label{app:learned_geometry_evolution}

To study the evolution of the learned representation geometry, we analyze
checkpoints of the persistent encoder trained from scratch on CIFAR-10~\citep{krizhevskyLearningMultipleLayers2009}. We
compare their representations using linear centered kernel alignment
(CKA)~\citep{kornblithSimilarityNeuralNetwork2019}. CKA measures
scale-invariant similarity between the
pairwise sample relations induced by two encoders, with higher values
indicating greater similarity.

\begin{figure}[h!]
\centering
\includegraphics[width=0.5\textwidth]{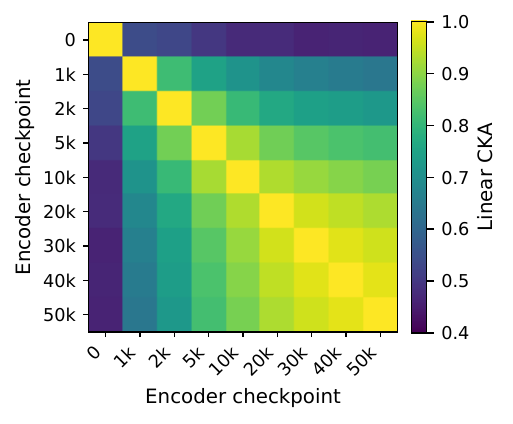}
\caption{\textbf{Representation Similarity across Training.}
Linear CKA between encoder checkpoints shows a gradual departure from the
identity initialization that stabilizes among late-training checkpoints.}
\label{fig:encoder_geometry_cka}
\end{figure}

As shown in Figure~\ref{fig:encoder_geometry_cka}, persistent representation
learning produces a gradual change in the geometry used to
construct the drifting field. The learned representation progressively
departs from its identity initialization, with CKA against the initial
encoder decreasing to $0.46$ at 50k steps, while neighboring late-training
checkpoints remain highly similar, with CKA above $0.95$ from 30k onward.

\begin{figure}[h]
\centering
\includegraphics[width=0.55\textwidth]{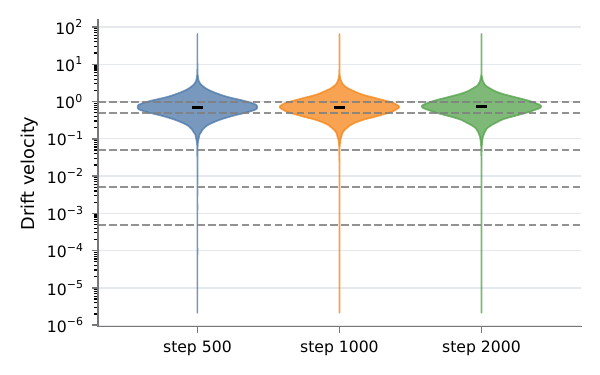}
\caption{\textbf{Velocity Budgets Relative to Drift Magnitudes.} Distribution of
$\|\widehat V_\varphi(u_i)\|_2$ at steps 500, 1000, and 2000 of the $V_{\max}=0.005$
run. Dashed lines mark the five $V_{\max}$ budgets compared in
Table~\ref{tab:velocity_budget_ablation}.}
\label{fig:velocity_clip_before_clip}
\end{figure}

\subsection{Drift-Magnitude Distributions and Velocity Budgets}
\label{app:velocity_clip_before_clip}

Figure~\ref{fig:velocity_clip_before_clip} presents the distribution of the
drift magnitude $\|\widehat V_\varphi(u_i)\|_2$
(Eq.~(\ref{eq:l2_drift_projection})) immediately before clipping, measured at
training steps 500, 1000, and 2000 of the $V_{\max}=0.005$ run. The dashed
lines indicate the five velocity budgets evaluated in
Table~\ref{tab:velocity_budget_ablation}. The three distributions have similar
shapes, with medians remaining approximately $0.70$ across steps 500, 1000,
and 2000. The $V_{\max}=1$ and $V_{\max}=0.5$ lines intersect the
displayed distributions, whereas the budgets at or below $0.05$ lie beneath
their main mass.

\subsection{Kernel Bandwidth in Pixel Space}
\label{app:pixel_bandwidth_sweep}

Changing the KDE temperature alone cannot make Drifting Models train well in
pixel space. We sweep the temperature $\tau$ over three orders of magnitude
($0.002$ to $2$) in pixel space, training each configuration for 10k steps
on CIFAR-10.
As shown in Figure~\ref{fig:pixel_bandwidth_sweep}, FID remains between
$134$ and $376$ throughout training across the sweep, far above the $27.41$
achieved by persistent representation learning within the same step budget.
This indicates that the ineffectiveness of pixel-space drifting reflects the
representation geometry itself, not merely a poorly tuned kernel bandwidth.

\begin{figure}[!t]
\centering
\includegraphics[width=0.5\textwidth]{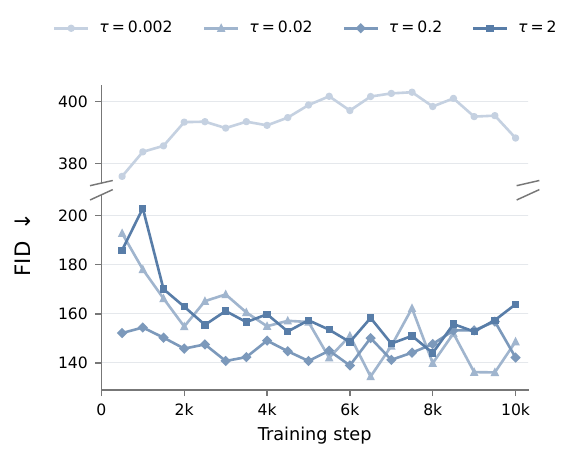}
\caption{\textbf{Effect of Kernel Bandwidth on Pixel-space Drifting.} FID over
training steps on CIFAR-10 for four KDE temperatures
$\tau\in\{0.002, 0.02, 0.2, 2\}$ in pixel space. None approaches the $27.41$
FID achieved by persistent representation learning within the same 10k-step
budget.}
\label{fig:pixel_bandwidth_sweep}
\end{figure}

\subsection{Training Time Cost Comparison}
\label{app:training_cost}

We compare standard drifting and our method under the same training time
budget. Persistent representation learning adds an encoder update and the
encoder forward and backward passes to each training step. Figure~\ref{fig:cifar10_training_time} reports FID against training time on CIFAR-10~\citep{krizhevskyLearningMultipleLayers2009}, and with four GH200 GPUs, our method 
reaches an FID of $21.57$, compared with $124.94$ for standard drifting.

\begin{figure}[!t]
\centering
\includegraphics[width=0.5\textwidth]{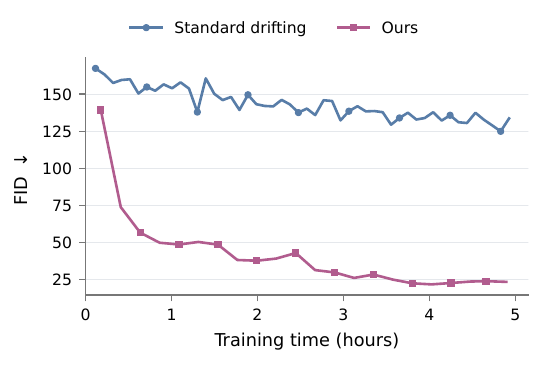}
\caption{\textbf{FID under the Same Training Time on CIFAR-10.} FID against
training time within 5 hours.}
\label{fig:cifar10_training_time}
\end{figure}

\clearpage

\section{Additional Qualitative Results}
\label{app:additional_qualitative_results}

Figure~\ref{fig:additional_svhn_samples} provides additional samples from our
method on SVHN~\citep{netzerReadingDigitsNatural2011}.

\begin{figure*}[h]
\centering
\includegraphics[width=\textwidth]{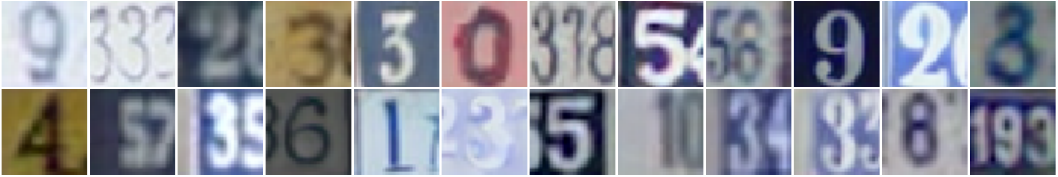}
\caption{\textbf{Additional Qualitative Results on SVHN.} Samples generated by
our method at the checkpoint attaining the best FID of $2.88$.}
\label{fig:additional_svhn_samples}
\end{figure*}

\clearpage

\section{KDE Ratio Loss, Drift Regression Loss, and the GAN loss Connection}
\label{app:kde_drift_derivations}

\subsection{Proof of Proposition~\texorpdfstring{\ref{prop:kde_drifting_gradient}}{1}}
\label{app:proof_kde_drifting}

\begingroup
\renewcommand{\theproposition}{\ref{prop:kde_drifting_gradient}}
\begin{proposition}[Equivalence of representation-space generator gradients, restated]
Let $B\ge1$ and $\tau>0$. For each sampled input $z_i$, let
$\hat{x}_i=G_\theta(z_i)$ and $u_i=E_\varphi(\hat{x}_i)$.
Assume that $G_\theta$ is differentiable with respect to $\theta$ at each
$z_i$ and that $E_\varphi$ is differentiable with respect to its input at
each $\hat{x}_i$. Write $J_E(x)=\partial E_\varphi(x)/\partial x$ and
$J_G(z)=\partial G_\theta(z)/\partial\theta$.
Use the Gaussian kernel in Eq.~(\ref{eq:gaussian_kernel}), with the same
bandwidth and feature anchors in the KDE ratio loss
$\widehat{\mathcal L}_{\varphi,\theta}^{\text{KDE}}$ and the drift regression loss
$\widehat{\mathcal L}_{\varphi,\theta}^{\text{Drift}}$ defined in Section~\ref{sec:density_ratio_drift}.
Hold $\varphi$ and the feature anchors fixed, differentiating the KDE ratio
loss only through the generated queries $E_\varphi(G_\theta(z_i))$.
With no velocity clipping, the gradients evaluated at the current parameters
used to construct the detached regression targets satisfy
\begin{equation*}
\nabla_\theta\widehat{\mathcal L}_{\varphi,\theta}^{\text{KDE}}
=\nabla_\theta\widehat{\mathcal L}_{\varphi,\theta}^{\text{Drift}}
=-\frac{2}{B}\sum_{i=1}^B
J_G(z_i)^\top J_E(\hat{x}_i)^\top\widehat V_\varphi(u_i).
\end{equation*}
\end{proposition}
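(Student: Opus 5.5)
The plan is to compute both gradients by the chain rule through the generated queries $u_i=E_\varphi(G_\theta(z_i))$. Each will reduce to $J_G(z_i)^\top J_E(\hat x_i)^\top$ applied to a feature-space vector, and the whole statement then reduces to the pointwise identity
\[
\widehat V_\varphi(u)=\tfrac{\tau}{2}\nabla_u\ell_\varphi(u).
\]
Here $\ell_\varphi=\log\widehat p_{\varphi,\tau}-\log\widehat q^{\theta}_{\varphi,\tau}$, with the anchors $v_j$ and $u_{i'}$ held fixed.

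\textbf{Step 1 (drift regression side).} The target $\operatorname{sg}(u_i+\widehat V_\varphi(u_i))$ is constant in $\theta$. The per-sample gradient of $\|E_\varphi(G_\theta(z_i))-c_i\|_2^2$ is therefore
\[
2\,J_G(z_i)^\top J_E(\hat x_i)^\top\bigl(u_i-c_i\bigr).
\]
This is evaluated at the current parameters that were used to build $c_i$, so $u_i-c_i=-\widehat V_\varphi(u_i)$. Averaging over $i$ gives $-\frac{2}{B}\sum_i J_G^\top J_E^\top\widehat V_\varphi(u_i)$ directly.

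\textbf{Step 2 (KDE score gradient).} For the Gaussian kernel, $\nabla_u k_\tau(u,y)=\frac{2}{\tau}k_\tau(u,y)(y-u)$. It follows that
\[
\nabla_u\log\widehat p_{\varphi,\tau}(u)=\frac{2}{\tau}\,\frac{\sum_j k_\tau(u,v_j)(v_j-u)}{\sum_j k_\tau(u,v_j)},
\]
and similarly for $\widehat q^{\theta}_{\varphi,\tau}$ with anchors $u_{i'}$. The normalization constants $(\pi\tau)^{-\tilde d/2}$ and $1/B$ cancel in the log-gradient. Subtracting the two expressions gives exactly $\frac{2}{\tau}\widehat V_\varphi(u)$ by Eq.~(\ref{eq:feature_space_drift}).

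\textbf{Step 3 (KDE ratio loss).} Since $\widehat{\mathcal L}^{\text{KDE}}_{\varphi,\theta}=-\frac{\tau}{B}\sum_i\ell_\varphi(E_\varphi(G_\theta(z_i)))$, with differentiation only through the query argument, the chain rule gives
\[
\nabla_\theta\widehat{\mathcal L}^{\text{KDE}}_{\varphi,\theta}=-\frac{\tau}{B}\sum_i J_G(z_i)^\top J_E(\hat x_i)^\top\nabla_u\ell_\varphi(u_i).
\]
Substituting Step 2 turns $\tau\cdot\frac{2}{\tau}$ into $2$, which matches Step 1.

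\textbf{Main obstacle.} The delicate point is the bookkeeping in Step 3. The generated anchors $u_{i'}$ in $\widehat q^{\theta}_{\varphi,\tau}$ themselves depend on $\theta$. The statement explicitly freezes them, so I would state clearly that only the query slot is differentiated. The self-term $k_\tau(u_i,u_i)$ contributes zero displacement $(u_i-u_i)$ to the numerator, so the result is consistent whether or not it is excluded, as in the paper's footnote convention. I would also check the dimensions of $J_G$ and $J_E$ so that the transposes compose correctly, mapping $\mathbb R^{\tilde d}\to\mathbb R^d\to$ parameter space.
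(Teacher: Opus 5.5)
Your proof is correct and follows the paper's argument: it uses the same Gaussian log-KDE identity $\tau\nabla_u\ell_\varphi(u)=2\widehat V_\varphi(u)$, the same chain rule through $J_E(\hat x_i)J_G(z_i)$ for the KDE ratio loss, and the same detached-target residual $u_i-\widetilde u_i=-\widehat V_\varphi(u_i)$ for the regression loss, only presented in a different order. One small correction to your self-term remark: $k_\tau(u_i,u_i)$ contributes nothing to the numerator but does enter the denominator $\sum_r k_\tau(u_i,u_r)$, so excluding it changes $\widehat V_\varphi(u_i)$; the identity therefore holds under either convention only because both losses use the same one, which is exactly what the paper's matched-anchor assumption guarantees.
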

\endgroup

\begin{proof}
For $\tau>0$, both representation-space Gaussian KDEs are positive.
Let $v_j=E_\varphi(\tilde{x}_j)$ denote the real feature anchors.
Hold the anchor values $\{u_r\}_{r=1}^B$ and $\{v_j\}_{j=1}^B$
fixed; the generated query $u_i=E_\varphi(G_\theta(z_i))$ remains
differentiable through $G_\theta$ and $E_\varphi$. Differentiating
with respect to a feature query $u$ gives
\begin{align*}
\nabla_u\log\widehat p_{\varphi,\tau}(u)
&=\frac{2}{\tau}
\frac{\sum_j k_\tau(u,v_j)(v_j-u)}{\sum_j k_\tau(u,v_j)},\\
\nabla_u\log\widehat q_{\varphi,\tau}^{\theta}(u)
&=\frac{2}{\tau}
\frac{\sum_r k_\tau(u,u_r)(u_r-u)}{\sum_r k_\tau(u,u_r)}.
\end{align*}
Consequently,
\begin{equation}
\tau\nabla_u\log\frac{\widehat p_{\varphi,\tau}(u)}
{\widehat q_{\varphi,\tau}^{\theta}(u)}=2\widehat V_\varphi(u).
\label{eq:density_ratio_drift}
\end{equation}
With the Jacobians defined in the proposition,
$\partial u_i/\partial\theta=J_E(\hat{x}_i)J_G(z_i)$.
Applying the chain rule to the KDE ratio loss and using
Eq.~(\ref{eq:density_ratio_drift}) yields
\begin{equation*}
\nabla_\theta\widehat{\mathcal L}_{\varphi,\theta}^{\text{KDE}}
=-\frac{2}{B}\sum_{i=1}^B
J_G(z_i)^\top J_E(\hat{x}_i)^\top
\widehat V_\varphi(u_i).
\end{equation*}
For the detached target
$\widetilde u_i=\operatorname{sg}(u_i+\widehat V_\varphi(u_i))$,
the target is held constant during differentiation. At the current
parameters used to construct it, the residual is $u_i-\widetilde u_i=-\widehat V_\varphi(u_i)$.
Differentiating the drift regression loss therefore gives
\begin{equation*}
\nabla_\theta\widehat{\mathcal L}_{\varphi,\theta}^{\text{Drift}}
=\frac{2}{B}\sum_{i=1}^B
J_G(z_i)^\top J_E(\hat{x}_i)^\top
(u_i-\widetilde u_i)
=\nabla_\theta\widehat{\mathcal L}_{\varphi,\theta}^{\text{KDE}}.
\end{equation*}
This proves the claimed identity under the stated differentiability assumptions.
Repeated anchors do not invalidate the argument because the KDEs remain
positive. The encoder parameters and anchors are fixed only for this
generator-gradient comparison; the encoder-learning phase differentiates
through the KDE to update $\varphi$.
\end{proof}

\textbf{Scope of the identity.}
The two objectives must use the same anchors, bandwidth, and treatment of
self-interactions. Differentiating through generated anchors introduces additional terms in the KDE ratio loss. Clipping changes the drift regression
loss gradient and generally breaks equality with the KDE ratio loss gradient. The proposition concerns one current-step gradient, and does not
assert an equality of losses or entire training trajectories.

\subsection{Pixel-Space Special Case and Comparison}
\label{app:pixel_kde_drifting}

Setting $E_\varphi=\mathrm{Id}$ gives $u_i=\hat{x}_i$,
$v_j=\tilde{x}_j$, and $J_E=I$. We retain the original pixel-space
notation below.
We denote the batch size by \(B\). For each batch, let
\(\{\tilde{x}_j\}_{j=1}^B\) denote the real samples and
\(\{\hat{x}_i\}_{i=1}^B\) the generated samples, where
$\hat{x}_i = G_\theta(z_i), z_i \sim \mu, i \in [B].$ The KDEs estimated from the current batch:
\begin{equation*}
\widehat p_\tau(x)=\frac{1}{B}\sum_{j=1}^B k_\tau(x,\tilde{x}_j),
\qquad
\widehat q^{\theta}_\tau(x)=\frac{1}{B}\sum_{i=1}^B k_\tau(x,\hat{x}_i).
\end{equation*}
The dependence on the current generator is implicit.
The corresponding density ratio and scalar potential are
\begin{equation*}
\widehat r(x)
=\frac{\widehat p_\tau(x)}{\widehat q^{\theta}_\tau(x)},
\qquad
\widehat U(x)=\tau\log\widehat r(x).
\end{equation*}

In this case, $\tau s_\varphi^{\mathrm{KDE}}(x)=\widehat U(x)$.
We retain the loss notation from Section~\ref{sec:density_ratio_drift},
with $E_\varphi=\mathrm{Id}$ throughout this pixel-space specialization.
The KDE ratio loss and drift regression loss therefore become
\begin{align*}
\widehat{\mathcal L}_{\varphi,\theta}^{\text{KDE}}
=-\frac{1}{B}\sum_{i=1}^B\widehat U(G_\theta(z_i)),\quad
\widehat{\mathcal L}_{\varphi,\theta}^{\text{Drift}}
=\frac{1}{B}\sum_{i=1}^B
\left\|G_\theta(z_i)-\operatorname{sg}\!\left(
\hat{x}_i+\widehat V_{p,q_\theta}(\hat{x}_i)\right)\right\|_2^2.
\end{align*}
Let $J_i=\partial G_\theta(z_i)/\partial\theta$.
For $\tau>0$, the Gaussian KDEs are positive, so their log gradients are
well defined. Holding the anchors fixed and differentiating the query gives
\begin{align*}
\nabla_x\log\widehat p_\tau(x)
&=\frac{2}{\tau}
\frac{\sum_{j=1}^{B}k_\tau(x,\tilde{x}_j)(\tilde{x}_j-x)}
{\sum_{j=1}^{B}k_\tau(x,\tilde{x}_j)},\\
\nabla_x\log\widehat q^{\theta}_\tau(x)
&=\frac{2}{\tau}
\frac{\sum_{i=1}^{B}k_\tau(x,\hat{x}_i)(\hat{x}_i-x)}
{\sum_{i=1}^{B}k_\tau(x,\hat{x}_i)}.
\end{align*}
Subtracting these expressions and using the definitions of $\widehat U$
and $\widehat V_{p,q_\theta}$ yields
\begin{equation}
\nabla_x\widehat U(x)
=\tau\left(\nabla_x\log\widehat p_\tau(x)
-\nabla_x\log\widehat q^{\theta}_\tau(x)\right)
=2\widehat V_{p,q_\theta}(x).
\label{eq:pixel_density_ratio_drift}
\end{equation}
Although $\widehat q^{\theta}_\tau$ records the generator that produced the
anchors, those anchors remain constant in the differentiation below.
The chain rule therefore gives
\begin{equation*}
\nabla_\theta\widehat{\mathcal L}_{\varphi,\theta}^{\text{KDE}}
=-\frac{1}{B}\sum_{i=1}^{B}J_i^\top\nabla_x\widehat U(\hat{x}_i)
=-\frac{2}{B}\sum_{i=1}^{B}J_i^\top
\widehat V_{p,q_\theta}(\hat{x}_i).
\end{equation*}
For the drift regression loss, the stop-gradient target is constant.
At the current parameters, its residual is
$G_\theta(z_i)-\operatorname{sg}(\hat{x}_i+
\widehat V_{p,q_\theta}(\hat{x}_i))=-\widehat V_{p,q_\theta}(\hat{x}_i)$.
Thus
\begin{equation*}
\nabla_\theta\widehat{\mathcal L}_{\varphi,\theta}^{\text{Drift}}
=-\frac{2}{B}\sum_{i=1}^{B}J_i^\top
\widehat V_{p,q_\theta}(\hat{x}_i)
=\nabla_\theta\widehat{\mathcal L}_{\varphi,\theta}^{\text{KDE}},
\end{equation*}
which recovers the original pixel-space gradient identity. The argument also covers coincident anchors,
since the Gaussian KDEs remain strictly positive.

\textbf{What changes with the representation?}
Pixel-space kernels use $\|x-y\|_2^2$, whereas representation-space kernels
use $\|E_\varphi(x)-E_\varphi(y)\|_2^2$. Thus an encoder changes which
anchors contribute to the KDE scores and drift. In addition, the feature
direction is pulled back through $J_E^\top$ before reaching the generator;
this factor is the identity in pixel space. These are the two mechanisms by
which learning the representation changes generator supervision. The
comparison establishes no universal ordering of the two geometries:
an encoder that discards relevant differences can also remove useful
supervision. The benefits of the learned geometry are evaluated empirically
in Figures~\ref{fig:toy_field_alignment} and
\ref{fig:representation_fid_kde_diagnostics}.

\subsection{Connection to Density-Ratio GAN Objectives}
\label{app:gan_critic_connection}

\textbf{Divergence-Based GAN Objectives.} Divergence-based GANs~\citep{nowozinFGANTrainingGenerative2016,yiMonoFlowRethinkingDivergence2023}
formulate distribution matching as a minimax problem between a generator
$G_\theta$ and a discriminator $D:\Reals^d\rightarrow\Ical$, where $\Ical\subseteq\Reals$ is the
range of discriminator outputs. We use a variational
representation of a divergence between $p$ and $q_\theta$, and let $\phi,\psi:\Ical\to\Reals$
be differentiable scalar functions specifying the discriminator's objective
contributions for real and generated samples, respectively.
For example, the standard logistic GAN uses
$\phi(t)=\log\sigma(t)$ and $\psi(t)=\log(1-\sigma(t))$,
where $\sigma(t)=(1+e^{-t})^{-1}$.
The corresponding minimax objective is
\begin{equation*}
\min_\theta\max_{D}
\left\{
\mathbb E_{\tilde{x}\sim p}[\phi(D(\tilde{x}))]
+\mathbb E_{z\sim\mu}[\psi(D(G_\theta(z)))]
\right\}.
\end{equation*}
For a fixed generator $G_\theta$, let $D^*$ denote a maximizer of the
inner problem. In practice, the generator can also be trained with an alternative loss
against a fixed discriminator 
$
\min_\theta\ -\mathbb E_{z\sim\mu}[g(D(G_\theta(z)))],
$
where $g:\Ical\to\Reals$ is a differentiable increasing scalar function.
Choosing $g=-\psi$ recovers the generator update of the minimax objective.
For the logistic discriminator, $g(t)=\log\sigma(t)$ gives the
non-saturating generator loss.

For an unrestricted discriminator class, an interior optimum $D^*$
satisfies the pointwise first-order condition, yielding the following
identity wherever $q_\theta(x)>0$ and $\phi'(D^*(x))\ne0$:
\begin{equation}
-\psi'(D^*(x))/\phi'(D^*(x))
=p(x)/q_\theta(x).
\label{eq:optimal_ratio}
\end{equation}
Thus, the optimal discriminator output can be transformed into the density
ratio between the real and generated data distributions. During the generator
update, $D^*$ is held fixed, and the input gradient of
$D^*(G_\theta(z))$ supplies a distribution-matching direction.

The optimality condition in Eq.~(\ref{eq:optimal_ratio}) shows that the output
of a discriminator $D$ induces the density-ratio estimate
$r_D(x):=-\psi'(D(x))/\phi'(D(x))$. At the function-space optimum,
$r_{D^*}(x)=p(x)/q_\theta(x)$. For a divergence parameterization satisfying
$r_D(x)>0$, define the scalar potential $U_D(x)=h(\log r_D(x))$,
where $h:\mathbb R\rightarrow\mathbb R$ is differentiable and strictly
increasing. At the optimum,
\begin{align*}
U_{D^*}(x)&=h\!\left(\log\frac{p(x)}{q_\theta(x)}\right),\qquad
\nabla_xU_{D^*}(x)
=h'\!\left(\log\frac{p(x)}{q_\theta(x)}\right)
\left(\nabla\log p(x)-\nabla\log q_\theta(x)\right).
\end{align*}

The potential $U_D(x)$ assigns higher values to regions where
the estimated data density is large relative to the current
model density. Holding $D$ fixed, we train the generator to
increase this value at its outputs by minimizing $\mathcal L_G(\theta;D)
=-\mathbb E_{z\sim\mu}\left[U_D(G_\theta(z))\right].$ The gradient $\nabla_x U_D(x)$ is backpropagated through
$G_\theta$ to update its parameters, encouraging the generator
to allocate more probability mass to regions that are
underrepresented relative to the data.

Choosing $h(a)=\tau a$ and replacing the population ratio with the
Gaussian KDE ratio $\widehat r$ gives the potential $\widehat U$ in
Appendix~\ref{app:pixel_kde_drifting}. Proposition~\ref{prop:kde_drifting_gradient}
and its pixel-space specialization establish the exact matched-KDE gradient identity. For a general
$h$, the factor $h'(\log r_D(x))$ changes the local magnitude; a learned
discriminator also need not equal the empirical KDE ratio. The proposition
therefore does not assert equivalence with arbitrary GAN generator losses.
The matched batch is used to compare the two generator objectives, while
Algorithm~\ref{alg:trainable_encoder_drifting} uses fresh batches for the
encoder and generator phases.

\textbf{Connection to velocity control.}
Spectral normalization controls a discriminator's Lipschitz constant by
constraining the spectral norm of each layer and is widely used to stabilize
GAN training~\citep{miyatoSpectralNormalizationGenerative2018}. Motivated by
this principle, we examine the corresponding controlled quantity in Drifting
Models. The Gaussian-kernel KDE correspondence established in
Section~\ref{sec:density_ratio_drift} relates the input gradient of a scalar
potential to the explicit drifting velocity. Under the logistic density-ratio
parameterization, $r_D(x)=e^{D(x)}$. Taking $h(u)=\tau u$ gives
$U_D(x)=\tau D(x)$; within this specialization, the associated drifting
velocity satisfies
$V(x)=\frac{1}{2}\nabla_xU_D(x)=\frac{\tau}{2}\nabla_xD(x)$. If $D$ is
$L_D$-Lipschitz with respect to the input norm, then
$\|\nabla_xD(x)\|_2\le L_D$ almost everywhere and consequently
$\|V(x)\|_2\le\tau L_D/2$.
Thus, a Lipschitz constraint on the discriminator corresponds to a bound on
particle velocity in the associated drifting formulation.

\section{A Detailed Comparison of Generator Updates}
\label{app:generator_updates_comparison}

The following algorithms make explicit the optimization sequences compared in
Section~\ref{sec:density_ratio_drift}. Drifting directly constructs a vector field and uses the drift regression loss to fit a detached target. A divergence-based GAN first optimizes
a discriminator and then updates the generator through the density-ratio
potential induced by the fixed discriminator.
Here the KDE ratio loss $\widehat{\mathcal L}_{\varphi,\theta}^{\text{KDE}}$
and drift regression loss $\widehat{\mathcal L}_{\varphi,\theta}^{\text{Drift}}$
use the definitions in Section~\ref{sec:density_ratio_drift}, specialized to
$E_\varphi=\mathrm{Id}$ as in Appendix~\ref{app:pixel_kde_drifting}.
We write $J_i=J_G(z_i)=\partial G_\theta(z_i)/\partial\theta$ in both algorithms.
The general GAN objective $\widehat{\mathcal L}_G(D^*,G_\theta)$ is denoted
separately from the KDE ratio loss.

\begin{figure}[H]
\noindent\begin{minipage}{0.8\textwidth}
\vspace{0pt}
\hrule
\captionof{algorithm}{\textbf{Drifting Models Training}}
\label{alg:drifting_training}
\vspace{2pt}
\hrule
\vspace{2pt}
\small
\textbf{Input:} Generator $G_\theta$, real data distribution $p$, latent
distribution $\mu$, kernel $k_\tau$, batch size $B$, and learning rate
$\gamma_G$
\begin{algorithmic}[1]
\Repeat
  \State Sample $\{z_i\}_{i=1}^{B}\sim\mu$ and $\{\tilde{x}_j\}_{j=1}^{B}\sim p$
  \State $\hat{x}_i\gets G_\theta(z_i)$ for all $i$
  \State Construct $\widehat V_{p,q_\theta}(\hat{x}_i)$ using Eq.~(\ref{eq:empirical_drift})
  \State $x_i^{\mathrm{target}}\gets\operatorname{sg}(\hat{x}_i+\widehat V_{p,q_\theta}(\hat{x}_i))$
  \State $\widehat{\mathcal L}_{\varphi,\theta}^{\text{Drift}}\gets B^{-1}\sum_{i=1}^{B}\|\hat{x}_i-x_i^{\mathrm{target}}\|_2^2$
  \State $\nabla_\theta\widehat{\mathcal L}_{\varphi,\theta}^{\text{Drift}}=-2B^{-1}\sum_{i=1}^{B}J_i^\top\widehat V_{p,q_\theta}(\hat{x}_i)$
  \State $\theta\gets\theta-\gamma_G\nabla_\theta\widehat{\mathcal L}_{\varphi,\theta}^{\text{Drift}}$
\Until{converged}
\end{algorithmic}
\vspace{2pt}
\hrule
\end{minipage}
\end{figure}

\begin{figure}[H]
\noindent\begin{minipage}{0.8\textwidth}
\vspace{0pt}
\hrule
\captionof{algorithm}{\textbf{Divergence-Based GAN and Matched KDE Update}}
\label{alg:gan_training}
\vspace{2pt}
\hrule
\vspace{2pt}
\small
\textbf{Input:} Generator $G_\theta$, discriminator $D$, real data distribution
$p$, latent distribution $\mu$, objectives $\phi,\psi,h$, kernel $k_\tau$,
batch size $B$, discriminator steps $K_D$, and learning rates
$\gamma_D,\gamma_G$
\begin{algorithmic}[1]
\Repeat
  \For{$k=1,\ldots,K_D$}
    \State Sample $\{z_i\}_{i=1}^{B}\sim\mu$ and
      $\{\tilde{x}_j\}_{j=1}^{B}\sim p$
    \State $\hat{x}_i\gets\operatorname{sg}(G_\theta(z_i))$ for all $i$
    \State $\displaystyle \widehat{\mathcal V}(D,G_\theta)\gets
      \frac{1}{B}\sum_{j=1}^{B}\phi(D(\tilde{x}_j))
      +\frac{1}{B}\sum_{i=1}^{B}\psi(D(\hat{x}_i))$
    \State $D\gets D+\gamma_D\nabla_D
      \widehat{\mathcal V}(D,G_\theta)$
  \EndFor
  \State Reuse $\{z_i\}$ and $\{\tilde{x}_j\}$ of Algorithm~\ref{alg:drifting_training}; $\hat{x}_i\gets G_\theta(z_i)$
  \State Estimate $r_{D^*}=p/q_\theta$ by $\widehat r=\widehat p_\tau/\widehat q^{\theta}_\tau$ with fixed anchors
  \State $\widehat{\mathcal L}_G(D^*,G_\theta)\gets-B^{-1}\sum_{i}h(\log\widehat r(\hat{x}_i))=\widehat{\mathcal L}_{\varphi,\theta}^{\text{KDE}}$
  \State $\nabla_\theta\widehat{\mathcal L}_{\varphi,\theta}^{\text{KDE}}=-2B^{-1}\sum_{i}J_i^\top\widehat V_{p,q_\theta}(\hat{x}_i)=\nabla_\theta\widehat{\mathcal L}_{\varphi,\theta}^{\text{Drift}}$
  \State $\theta\gets\theta-\gamma_G\nabla_\theta\widehat{\mathcal L}_G(D^*,G_\theta)$
  \color{black}
\Until{converged}
\end{algorithmic}
\vspace{2pt}
\hrule
\end{minipage}
\end{figure}

The matched KDE comparison uses the same real and generated batch,
fixed anchors, $h(a)=\tau a$, and no velocity clipping; the gradient identity
is derived in Appendix~\ref{app:pixel_kde_drifting}. The connection to a
GAN critic is discussed separately in Appendix~\ref{app:gan_critic_connection}.
This comparison isolates the current-step gradients, whereas
Algorithm~\ref{alg:trainable_encoder_drifting} uses fresh batches for
the representation and generator phases.

\section{Limitations}
\label{sec:limitations}

Our study has two limitations. Although representations learned from scratch
substantially improve standard drifting, their performance still trails that
obtained with pretrained encoders, indicating that training dynamics alone do
not yet recover all of the useful structure provided by external pretraining.
In addition, owing to computational constraints, the effectiveness of
persistent representation learning on larger-scale datasets remains to be
established.

\end{document}